\documentclass{article}

 \usepackage[preprint]{neurips_2026}

\usepackage[utf8]{inputenc} 
\usepackage[T1]{fontenc}    
\usepackage{hyperref}       
\usepackage{url}            
\usepackage{booktabs}       
\usepackage{multirow}       
\usepackage{graphicx}       
\usepackage{subcaption}     
\usepackage{amsfonts}       
\usepackage{amsmath}        
\usepackage{amsthm}
\newtheorem{proposition}{Proposition}
\newtheorem{remark}{Remark}
\usepackage{enumitem}
\usepackage{amssymb}        
\usepackage{nicefrac}       
\usepackage{microtype}      
\usepackage{xcolor}         

\title{Entropy-Gated Latent Recursion}

\author{
Soham Bhattacharjee\thanks{The first two authors contributed equally.}, Dushyant Singh Chauhan$^{*}$, Salem Lahlou, Martin Takac, and Nils Lukas
\\
Department of Machine Learning \\
Mohamed bin Zayed University of Artificial Intelligence
\\
Abu Dhabi, United Arab Emirates
\\
\texttt{sohambhattacharjeenghss@gmail.com}, \texttt{dushyant.chauhan@mbzuai.ac.ae}, \\
\texttt{Salem.Lahlou@mbzuai.ac.ae}, \texttt{Martin.Takac@mbzuai.ac.ae}, \texttt{nils.lukas@mbzuai.ac.ae}
}

\begin{document}

\maketitle

\begin{abstract}
Inference-time scaling has become the dominant lever for improving 
language-model reasoning, but existing methods derive rollout diversity 
from a single source: \emph{stochastic token-level sampling}. We argue 
that this single-axis sampling space is fundamentally limiting, and identify 
a second, fully \emph{deterministic} and \emph{complementary} axis: the 
layer span $L$ at which a frozen model's top decoder layers are recursively 
re-applied at high-uncertainty tokens. Different choices of $L$ produce 
distinct rollouts that solve different subsets of problems, with no 
stochasticity. We instantiate this axis through \emph{Entropy-Gated Latent 
Recursion} (EGLR), a training-free decoding procedure that re-applies the 
top-$L$ layers for at most $K_{\max}$ iterations until the next-token 
distribution converges. Combined with $T$ temperature samples, EGLR turns 
a single-axis stochastic rollout pool into an $L\!\times\!T$ Cartesian 
sampling space at almost the same per-rollout cost. We characterize this 
space across $8$ instruction-tuned models and $6$ math reasoning benchmarks, 
and show that the $L$-axis is genuinely complementary to temperature: on 
MATH-500 with Qwen2.5-3B-Instruct, the joint $L\!\times\!T$ oracle reaches 
$91.6\%$, $+8.2$ percentage points beyond the temperature-only oracle
($83.4\%$) and $+10.4$ points beyond the layer-only oracle ($81.2\%$), 
confirming that the two axes capture genuinely complementary problems. The 
expanded rollout pool provides richer per-prompt candidates for any 
downstream procedure that consumes rollouts, including self-consistency, best-of-$N$ with verifiers, and group-relative RL training (GRPO), opening a new
direction for inference-time scaling that does not rely on stochastic noise. 
\end{abstract}

\section{Introduction}
\label{sec:intro}

\begin{figure}[!h]
    \centering
    \includegraphics[width=0.6\linewidth]{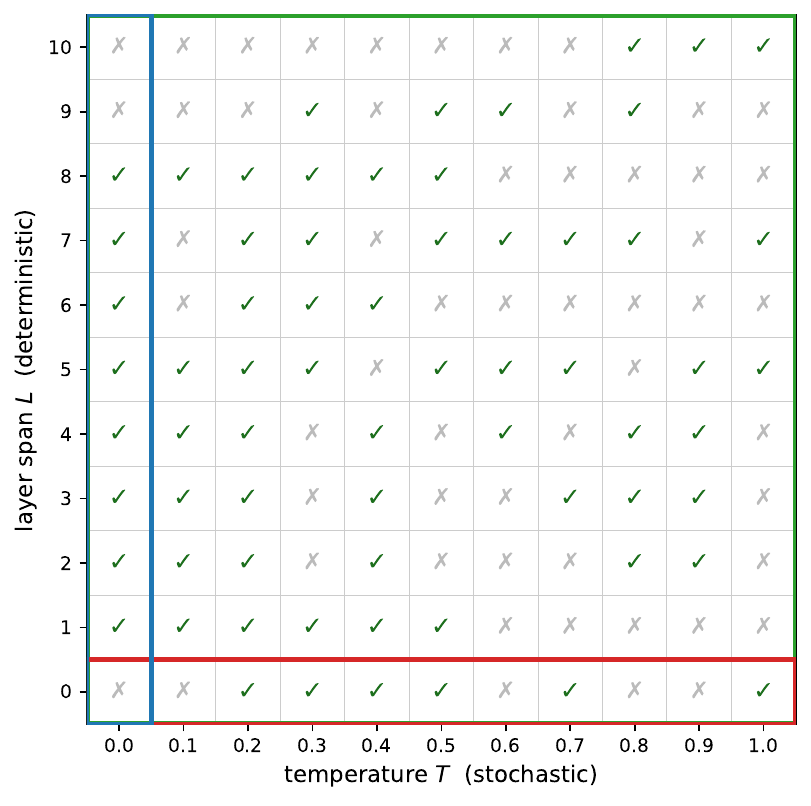}
    \caption{\textbf{The $L\!\times\!T$ sampling space} on a single MATH-500 problem (Qwen2.5-3B-Instruct, \#105). Each cell: correct (green \checkmark) or wrong (gray $\times$) under one $(L,T)$ configuration. \textcolor[HTML]{D62728}{\textbf{Red}}: $T$-only ($L\!=\!0$); \textcolor[HTML]{1F77B4}{\textbf{blue}}: $L$-only ($T\!=\!0$); \textcolor[HTML]{2CA02C}{\textbf{green}}: joint pool. Across all 500 MATH-500 problems, joint oracle = $91.6\%$ vs.\ $83.4\%$ ($T$-only) and $81.2\%$ ($L$-only), confirming the two axes are complementary.}
    \label{fig:teaser}
\end{figure}

Inference-time compute scaling~\citep{snell2024scaling, openai2024o1} has become a primary lever for improving reasoning without additional training. Self-consistency~\citep{wang2022sc}, best-of-$N$~\citep{cobbe2021grade, lightman2024prm}, and tree search~\citep{yao2023tot} all demonstrate that more inference compute yields better answers. Yet despite their differences, these methods share one structural property: \emph{trajectory diversity comes entirely from token-level stochasticity}, making their rollout space one-dimensional, parameterized only by temperature $T$.

Token-level stochasticity has well-known limits: higher temperature adds noise but not qualitatively different reasoning~\citep{wang2022sc}, and $M$ samples still yield $M$ instances of the same stochastic process~\citep{Li_2022}. This raises a structural question: \emph{is there a second, complementary axis along which a frozen model produces qualitatively different rollouts without injecting noise?}

We answer affirmatively. We identify the \textbf{layer span $L$} as a second,
fully \emph{deterministic} axis of rollout diversity. We instantiate it through
\emph{Entropy-Gated Latent Recursion} (EGLR), a training-free procedure that
re-applies the top-$L$ layers at high-uncertainty tokens for at most $K_{\max}$
iterations, turning the stochastic rollout pool into an \emph{$L\!\times\!T$
Cartesian sampling space} at almost the same per-rollout cost (Fig.~\ref{fig:teaser}).

Empirically, the two axes are complementary and the complementarity is
substantial. On MATH-500 with Qwen2.5-3B-Instruct, the joint
$L\!\times\!T$ oracle reaches $91.6\%$, $+8.2$~pp beyond the $T$-only
oracle ($83.4\%$) and $+10.4$~pp beyond the $L$-only oracle ($81.2\%$).
As a deployable aggregator over this pool, \emph{EGLR-SC} improves over
greedy on $44/48$ (model, dataset) cells across $8$ instruction-tuned
models and $6$ math reasoning benchmarks, and outperforms FLOP-matched
beam search on $11/12$ cells.

Our main contributions are as follows. \textbf{(1) A new sampling axis:} we identify the layer span $L$ as a fully deterministic, training-free axis of rollout diversity, complementary to temperature. \textbf{(2) EGLR:} an entropy-gated, training-free decoding method that selectively iterates a frozen model's top-$L$ layers at high-uncertainty tokens, requiring no new parameters or fine-tuning. \textbf{(3) The $L\!\times\!T$ rollout pool:} combining EGLR with temperature sampling yields a Cartesian sampling space whose joint oracle on MATH-500 (Qwen2.5-3B) reaches $91.6\%$, exceeding either single-axis oracle by $+8$--$10$~pp; we characterize this pool through per-cell accuracy, pairwise disagreement, and oracle-decomposition analyses. \textbf{(4) EGLR-SC:} a self-consistency aggregator over the rollout pool that improves over greedy on $44/48$ (model, dataset) cells and outperforms FLOP-matched beam search on $11/12$ cells.

\section{Related Work}
\label{sec:related}

\textbf{Inference-time compute scaling.}
Allocating additional compute at inference can rival gains from scaling model size~\citep{snell2024scaling}. Frontier systems such as o1~\citep{openai2024o1} and DeepSeek-R1~\citep{deepseek2025r1} achieve this through trained deliberation. EGLR operates in the complementary \emph{training-free} regime: compute is allocated dynamically by the model's own entropy signal with no weight modification or auxiliary modules.

\textbf{Single-axis sampling and aggregation.}
Self-consistency~\citep{wang2022sc} and best-of-$N$ with reward models~\citep{cobbe2021grade,lightman2024prm} derive rollout diversity solely from temperature-induced stochasticity. EGLR introduces a second, fully \emph{deterministic} axis ($L$) whose rollout pool is a strict superset of the temperature-only pool these methods draw from, and EGLR-SC strictly improves over $T$-only self-consistency at matched compute.

\textbf{Latent and looped reasoning.}
Looped Transformers~\citep{giannou2023looped} re-apply the entire decoder block repeatedly and demonstrate emergent algorithmic behavior, but require dedicated training. CoCoNuT~\citep{hao2024coconut} feeds the last hidden state back as the next input embedding to enable continuous latent reasoning, again requiring fine-tuning. The Hierarchical Reasoning Model (HRM)~\citep{wang2025hrm} and Tiny Recursive Model (TRM)~\citep{jolicoeur2025trm} show that latent recursion (iterating over hidden representations without emitting tokens) enables strong reasoning with as few as 7M--27M trained parameters. Quiet-STaR~\citep{zelikman2024quietstar} interleaves implicit reasoning at training time. EGLR shares HRM/TRM's core insight but applies it \emph{training-free} to any frozen pretrained transformer, re-invoking the top-$L$ layers with an entropy gate rather than a dedicated recursive module.

\textbf{Entropy-gated branching.}
Entropy-Gated Branching~\citep{li2026egb} shares EGLR's key observation that a small subset of high-entropy tokens drives the majority of prediction uncertainty, and proposes selectively expanding those positions. The two methods diverge sharply in mechanism: EGB branches in \emph{token space}, generating multiple candidate continuations at uncertain positions and pruning them with an \emph{external feedback model}; EGLR recurses in \emph{latent layer space}, re-invoking the top-$L$ frozen layers and producing a \emph{deterministic} refined distribution with no auxiliary scorer. EGB is therefore a verifier-dependent tree-search method; EGLR is a self-contained, verifier-free sampling axis.

\textbf{Contrastive, adaptive, and early-exit decoding.}
DoLa~\citep{chuang2024dola} and contrastive decoding~\citep{li2023contrastive} exploit layer-wise states for a single decoding pass, yielding one deterministic output with no sampling axis. Early-exit decoders~\citep{xin2020deebert,schuster2022confident} reduce compute on easy tokens; EGLR adds compute on uncertain ones without retraining. To our knowledge, EGLR is the first work to identify entropy-gated layer recursion as a second, complementary and deterministic inference-time sampling axis that, combined with temperature, yields an $L\!\times\!T$ Cartesian rollout pool consumable by any downstream aggregator.

\section{Method}
\label{sec:method}

\subsection{Preliminaries}
\label{subsec:prelim}


Let $f_\theta$ denote a frozen autoregressive Transformer language model with 
$N$ decoder layers, hidden dimension $d$, and vocabulary size $V$. For a context 
$x_{<t} = (x_1, \ldots, x_{t-1})$, the model produces hidden states layer by 
layer. Writing $h_t^{(0)}$ for the input embedding and $h_t^{(\ell)} \in 
\mathbb{R}^d$ for the residual-stream state \emph{at the output of layer $\ell$}, 
the forward pass obeys
\begin{equation}
    h_t^{(\ell)} \;=\; h_t^{(\ell-1)} 
    + \mathrm{Attn}^{(\ell)}\!\bigl(\mathrm{LN}^{(\ell)}_{\mathrm{a}}(h_t^{(\ell-1)})\bigr) 
    + \mathrm{MLP}^{(\ell)}\!\bigl(\mathrm{LN}^{(\ell)}_{\mathrm{m}}(\widetilde{h}_t^{(\ell)})\bigr),
    \label{eq:residual}
\end{equation}
for $\ell = 1, \ldots, N$, where $\widetilde{h}_t^{(\ell)}$ is the post-attention residual and $\mathrm{LN}$ denotes RMSNorm or LayerNorm. After the final layer, a normalization $\mathrm{LN}_{\mathrm{f}}$ and unembedding head $W_o \in \mathbb{R}^{V \times d}$ produce next-token logits and probabilities:
\begin{equation}
    z_t \;=\; W_o \, \mathrm{LN}_{\mathrm{f}}\!\bigl(h_t^{(N)}\bigr), 
    \qquad 
    p_t \;=\; \mathrm{softmax}(z_t) \in \Delta^{V-1}.
    \label{eq:logits}
\end{equation}
Greedy decoding emits $y_t = \arg\max_v \, p_{t,v}$. The token-level Shannon 
entropy will serve as our gating signal.
\begin{equation}
    H(p_t) \;=\; -\sum_{v=1}^{V} p_{t,v}\, \ln p_{t,v}
    \label{eq:entropy}
\end{equation}

\begin{figure}[t]
    \centering
    \includegraphics[width=\linewidth]{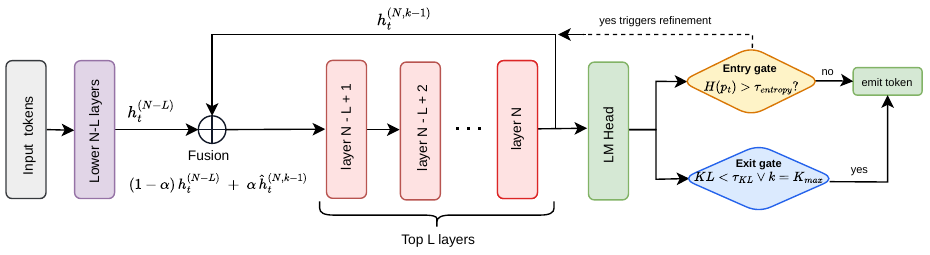}
    \caption{\textbf{Overview of Entropy-Gated Latent Recursion.}
At each decoding step, the entry gate checks $H(p_t)$ against $\tau_H$
(Eq.~\eqref{eq:gate}). Tokens below threshold are emitted greedily.
Above threshold, the top-$L$ layers are re-applied to the fused anchor
state (Eq.~\eqref{eq:fuse}) for up to $K_{\max}$ iterations, with
KL-based early exit (Eq.~\eqref{eq:kl_stop}).
Varying $L$ yields structurally distinct deterministic trajectories
(Section~\ref{subsec:diversity}).}
    \label{fig:model}
\end{figure}

\subsection{Entropy-Gated Compute Allocation}
\label{subsec:gate}

Token-level uncertainty is highly non-uniform: most reasoning tokens are near-deterministic, while a small minority carry the bulk of uncertainty. Allocating extra compute uniformly (as in beam search) is therefore wasteful. EGLR routes only tokens above a threshold $\tau_H$ to the refinement loop; all others decode greedily. Extra compute fires only when
\begin{equation}
    H(p_t) \;>\; \tau_H.
    \label{eq:gate}
\end{equation}
When Eq.~\eqref{eq:gate} fires the refinement of Section~\ref{subsec:recursion} runs; otherwise decoding proceeds greedily. $\tau_H$ is set automatically as the 95th-percentile per-token entropy of a greedy baseline run, capped at $2.5$ nats, targeting $\rho \approx 5\%$ trigger rate. Overhead is analyzed in Section~\ref{subsec:cost}.

\subsection{Layer-wise Recursive Refinement}
\label{subsec:recursion}

Let $L$ denote the number of top decoder layers re-iterated, so the refinement zone is $\{N\!-\!L\!+\!1, \ldots, N\}$. Refinement operates on two anchor states from the original forward pass:
\begin{align}
    h_t^{(N-L)} &\;\text{is the output of layer $N-L$, i.e.\ input to the first 
    layer of the refinement zone,} \label{eq:anchor}\\
    h_t^{(N,0)} &\;=\; h_t^{(N)} \;\text{is the output of layer $N$ before 
    $\mathrm{LN}_{\mathrm{f}}$ is applied.} \label{eq:deep}
\end{align}
Both lie in the pre-norm residual-stream regime, ensuring statistically consistent fusion. Let $\mathcal{R}_L : \mathbb{R}^d \to \mathbb{R}^d$ apply layers $\{N-L+1, \ldots, N\}$ and return the output before $\mathrm{LN}_{\mathrm{f}}$:
\begin{equation}
    \mathcal{R}_L(h) \;\triangleq\; h^{(N)}\big|_{h^{(N-L)} = h}
    \label{eq:topL}
\end{equation}
Refinement proceeds for $k = 1, \ldots, K_{\max}$. First, $h_t^{(N,k-1)}$ is norm-matched to the anchor:
\begin{equation}
    \hat{h}_t^{(N,k-1)} \;=\; h_t^{(N,k-1)} 
    \cdot \frac{\bigl\|h_t^{(N-L)}\bigr\|}
               {\bigl\|h_t^{(N,k-1)}\bigr\|},
    \label{eq:normmatch}
\end{equation}
and the fused input, refined state, and updated distribution are then:
\begin{align}
    h_t^{\mathrm{in}, (k)} &\;=\; (1-\alpha)\, h_t^{(N-L)} 
        \;+\; \alpha\, \hat{h}_t^{(N,k-1)}, \label{eq:fuse}\\
    h_t^{(N,k)} &\;=\; \mathcal{R}_L\!\left(h_t^{\mathrm{in}, (k)}\right), 
        \label{eq:topL_apply}\\
    z_t^{(k)} &\;=\; W_o\, \mathrm{LN}_{\mathrm{f}}\!\bigl(h_t^{(N,k)}\bigr), 
        \qquad p_t^{(k)} \;=\; \mathrm{softmax}\!\bigl(z_t^{(k)}\bigr). 
        \label{eq:probs_k}
\end{align}
$\alpha \in [0, 1]$ controls how aggressively the refined state perturbs the anchor. Iteration terminates when
\begin{equation}
    \mathrm{KL}\!\left(p_t^{(k)} \,\|\, p_t^{(k-1)}\right) \;<\; \tau_{\mathrm{KL}},
    \label{eq:kl_stop}
\end{equation}
or the cap $K_{\max}$ is reached. The emitted token is $y_t = \arg\max_v p_{t,v}^{(k^\star)}$.

\subsection{Layer Span $L$ as a Complementary Sampling Axis}
\label{subsec:diversity}

Varying $L$ generates \emph{distinct, deterministic} reasoning trajectories.
By Eq.~\eqref{eq:residual}, anchor states at depths $L_1 \neq L_2$ differ
by the cumulative residual contributions of the intermediate layers, so the
fused input \eqref{eq:fuse} and operator $\mathcal{R}_L$ in
\eqref{eq:topL_apply} both differ across configurations, causing
$p_t^{\star}(L)$ to flip the argmax at high-entropy positions, after which
all subsequent hidden states diverge. Crucially, this requires \emph{no
stochasticity}: for fixed $L$ and greedy decoding, $p_t^\star(L)$ is a
deterministic function of $f_\theta$ and the prompt. The diversity is
\emph{structural} rather than stochastic noise.
Proposition~\ref{prop:distinctness} (Appendix~\ref{app:proof}) formalizes
this under mild non-degeneracy conditions.

\paragraph{The $L\!\times\!T$ Cartesian sampling space.}
$L$ varies internal computation deterministically; $T$ varies token selection
stochastically. Their Cartesian product $\mathcal{T}_{L\times T} =
\{y(L,T)\}$ over $L \in \{1,\ldots,L_{\max}\}$ and $T \in
\{T_1,\ldots,T_n\}$ forms the inference-time sampling space studied in this
paper. Empirically (Section~\ref{subsec:diversity_deepdive}), the oracle
over $\mathcal{T}_{L\times T}$ exceeds either single-axis oracle by $+8.2$
pp and $+10.4$ pp on MATH-500 (Qwen2.5-3B).

\subsection{Aggregating Over the $L\!\times\!T$ Pool: EGLR-SC}
\label{subsec:eglrsc}

We instantiate the simplest aggregator over the $L\!\times\!T$ pool, \emph{majority-vote self-consistency} (EGLR-SC). Given a configuration set
\begin{equation}
    \mathcal{C} \;=\; \bigl\{(L_i,\, T_i)\bigr\}_{i=1}^{M},
\end{equation}
we run EGLR (greedy if $T_i = 0$, sampled otherwise) under each 
$(L_i, T_i)$, extract the final-answer string $a_i$ from each completion 
via the standard task-specific extractor (e.g.\ the boxed expression for 
math benchmarks), and aggregate via majority vote:
\begin{equation}
    \hat{a} \;=\; \arg\max_{a} \bigl|\{i : a_i = a\}\bigr|
    \label{eq:vote}
\end{equation}
Two regimes are of interest: (a)~purely deterministic ($T_i\!=\!0$ for all $i$), diversity from $L$ alone; (b)~Cartesian ($L$ layer spans $\times$ $T$ temperatures), exploiting both axes. As the budget $B$ grows, $\hat{a}$ inherits the concentration properties of self-consistency~\citep{wang2022sc} while drawing on a structurally richer rollout pool.

\paragraph{Diversity benefits beyond accuracy.}
The $L\!\times\!T$ pool has direct downstream value. For GRPO-style
RL~\citep{shao2024deepseekmath}, $L\!\times\!T$ rollouts yield distinct
candidates at negligible cost, with the deterministic $L$ axis
making a reproducible subset unavailable from temperature sampling alone.
For best-of-$N$ reranking, $L$ expands the candidate pool without inflating
compute, most impactful in the small-$N$ regime where stochastic ensembles
concentrate on near-duplicate trajectories.



\subsection{Computational Cost}
\label{subsec:cost}

Let $C_{\mathrm{full}}$ denote the FLOP cost of a single full forward pass 
through $f_\theta$ across all $N$ layers, and let 
$\rho = P[H(p_t) > \tau_H]$ denote the refinement-trigger rate. The cost 
of one EGLR refinement iteration (Eqs.~\eqref{eq:fuse}--\eqref{eq:probs_k}) 
is dominated by the top-$L$ application in \eqref{eq:topL_apply}, which 
costs approximately $\frac{L}{N} C_{\mathrm{full}}$. The per-token cost of 
EGLR is bounded above by
\begin{equation}
    C_{\mathrm{EGLR}} \;\leq\; C_{\mathrm{full}} 
    \;+\; \rho \cdot K_{\max} \cdot \frac{L}{N}\, C_{\mathrm{full}} 
    \;=\; \Bigl(1 + \rho K_{\max}\,\tfrac{L}{N}\Bigr) C_{\mathrm{full}},
    \label{eq:cost}
\end{equation}
where the bound is tight only when every triggered token exhausts all $K_{\max}$ iterations; in practice KL early-exit reduces this below $K_{\max}$. For $\rho\!\approx\!0.05$, $K_{\max}\!=\!3$, $L\!=\!10$, $N\!=\!36$, worst-case overhead is below $5\%$ over greedy. EGLR-SC at $M$ configurations inherits a factor of $M$, matching standard self-consistency at $M$ samples. The $L$ axis adds no overhead beyond what one EGLR run already pays. Deployment implications appear in Appendix~\ref{app:practical}.

\section{Experiments}
\label{sec:experiments}

\subsection{Experimental Setup}
\label{subsec:setup}

\textbf{Models.}
We evaluate across eight frozen open-weight instruction-tuned models (0.5B--14B): Qwen2.5-\{0.5B,3B,7B,14B\}-Instruct~\citep{qwen2024qwen25}, Qwen2.5-Math-\{1.5B,7B\}-Instruct~\citep{yang2024qwen25math}, Llama-3.1-8B-Instruct~\citep{dubey2024llama3}, and Mistral-7B-Instruct-v0.3~\citep{jiang2023mistral7b}. No fine-tuning or auxiliary modules are used.

\textbf{Datasets.}
We evaluate on six benchmarks spanning four orders of difficulty: \textbf{GSM8K}~\citep{cobbe2021grade} ($1{,}319$ grade-school problems); \textbf{MATH-500}~\citep{lightman2024prm,hendrycks2021math} ($500$ competition problems); \textbf{MinervaMath}~\citep{lewkowycz2022minerva} ($272$ undergraduate STEM problems); \textbf{AMC23}~\citep{aime_amc_dataset} ($40$ olympiad-prep problems); and \textbf{AIME24/25}~\citep{aime_amc_dataset} ($30$ problems each, hardest pre-Olympiad level).

\paragraph{Baselines.}
$L\!=\!0$ is standard greedy decoding (no refinement). We compare against \textbf{Greedy} ($1\times$ compute), \textbf{Self-Consistency}~\citep{wang2022sc}
at matched $M$ samples, and \textbf{Beam search} at $B\!\in\!\{4,11\}$. Beam-11 is FLOP-matched to the full $L\!=\!0,\ldots,10$ EGLR sweep (${\sim}11.2\!\times$ greedy).

\textbf{Metrics.}
We report exact-match \emph{accuracy}, \emph{oracle accuracy} (fraction of problems solved by at least one configuration), and compute cost relative to greedy (Eq.~\eqref{eq:cost}), using each benchmark's canonical extraction pipeline.

\textbf{Hyperparameters.}
We use $\alpha\!=\!0.2$, $K_{\max}\!=\!3$, $\tau_{\mathrm{KL}}\!=\!5\!\times\!10^{-4}$ throughout. $\tau_H$ is set per (model, dataset) as the 95th-percentile per-token entropy of a greedy baseline run, capped at $2.5$ nats, yielding $\rho\!\approx\!5\%$ trigger rate. For EGLR-SC$_{L\times T}$: $L\!\in\!\{1,\ldots,10\}$, $T\!\in\!\{0.0,\ldots,1.0\}$, top-$k\!=\!50$ for $T\!>\!0$, max $3{,}072$ tokens, batch size $16$, seed $1{,}333$.

\subsection{Results}
\label{subsec:results_singlecompletion}

Table~\ref{tab:main_singlecompletion} reports exact-match accuracy under four settings: \textbf{Greedy}; \textbf{EGLR (best $L$)}; \textbf{EGLR-SC$_L$} (majority vote over $L\!\in\!\{1,\ldots,10\}$); and \textbf{EGLR-Oracle$_L$} ($L$-only diversity ceiling). Table~\ref{tab:beam_comparison} reports the head-to-head against beam search on Qwen2.5-3B and 7B-Instruct at FLOP-matched compute. The results in Table~\ref{tab:main_singlecompletion} establish four findings.

\begin{table}[t]
\centering
\caption{Accuracy (\%) on six math reasoning benchmarks across eight models. EGLR-SC$_L$ beats greedy on $44/48$ cells; EGLR-Oracle$_L$ is the $L$-only diversity ceiling. No auxiliary models, raining, or stochasticity.}
\label{tab:main_singlecompletion}
\scriptsize
\setlength{\tabcolsep}{3pt}
\resizebox{\linewidth}{!}{%
\begin{tabular}{llcccccc}
\toprule
\textbf{Model} & \textbf{Method} & \textbf{GSM8K} & \textbf{MATH-500} & \textbf{MinervaMath} & \textbf{AMC23} & \textbf{AIME24} & \textbf{AIME25} \\
\midrule
\multirow{4}{*}{\textbf{Qwen2.5-0.5B-Instruct}}      & \textbf{Greedy} & 40.5 & 22.0 & 2.6  & 10.0 & 0.0  & 0.0  \\
                                            & \textbf{EGLR (best $L$)} & 41.0 & 25.6 & 4.8  & 12.5 & 3.3  & 0.0  \\
                                            & \textbf{EGLR-SC$_L$ ($B\!=\!10$)} & 50.9 & 34.2 & 5.5  & 15.0 & 3.3  & 0.0  \\
                                            & \textbf{EGLR-Oracle$_L$} & \textbf{74.0} & \textbf{51.4} & \textbf{12.1} & \textbf{40.0} & \textbf{3.3}  & 0.0  \\
\midrule
\multirow{4}{*}{\textbf{Qwen2.5-3B-Instruct}}        & \textbf{Greedy} & 83.1 & 64.6 & 15.4 & 37.5 & 6.7  & 3.3  \\
                                            & \textbf{EGLR (best $L$)} & 83.0 & 65.0 & 18.4 & 47.5 & 13.3 & 6.7  \\
                                            & \textbf{EGLR-SC$_L$ ($B\!=\!10$)} & 85.1 & 69.2 & 19.1 & 55.0 & 16.7 & 6.7  \\
                                            & \textbf{EGLR-Oracle$_L$} & \textbf{93.1} & \textbf{81.2} & \textbf{30.9} & \textbf{67.5} & \textbf{20.0} & \textbf{13.3} \\
\midrule
\multirow{4}{*}{\textbf{Qwen2.5-7B-Instruct}}        & \textbf{Greedy} & 89.1 & 73.2 & 22.1 & 52.5 & 6.7  & 10.0 \\
                                            & \textbf{EGLR (best $L$)} & 89.2 & 75.2 & 23.9 & 60.0 & 16.7 & 13.3 \\
                                            & \textbf{EGLR-SC$_L$ ($B\!=\!10$)} & 89.7 & 77.8 & 23.5 & 60.0 & 16.7 & 16.7 \\
                                            & \textbf{EGLR-Oracle$_L$} & \textbf{94.1} & \textbf{88.0} & \textbf{32.4} & \textbf{82.5} & \textbf{23.3} & \textbf{26.7} \\
\midrule
\multirow{4}{*}{\textbf{Qwen2.5-14B-Instruct}}       & \textbf{Greedy} & 92.4 & 77.0 & 27.2 & 67.5 & 10.0 & 16.7 \\
                                            & \textbf{EGLR (best $L$)} & 93.0 & 77.4 & 29.0 & 67.5 & 20.0 & 20.0 \\
                                            & \textbf{EGLR-SC$_L$ ($B\!=\!10$)} & 93.5 & 79.8 & 28.7 & 67.5 & 20.0 & 20.0 \\
                                            & \textbf{EGLR-Oracle$_L$} & \textbf{96.3} & \textbf{86.2} & \textbf{35.3} & \textbf{82.5} & \textbf{26.7} & \textbf{26.7} \\
\midrule
\multirow{4}{*}{\textbf{Qwen2.5-Math-1.5B-Instruct}} & \textbf{Greedy} & 83.3 & 70.4 & 19.5 & 55.0 & 13.3 & 13.3 \\
                                            & \textbf{EGLR (best $L$)} & 84.2 & 72.6 & 19.1 & 62.5 & 13.3 & 20.0 \\
                                            & \textbf{EGLR-SC$_L$ ($B\!=\!10$)} & 84.7 & 74.4 & 19.1 & 65.0 & 10.0 & 13.3 \\
                                            & \textbf{EGLR-Oracle$_L$} & \textbf{92.5} & \textbf{84.4} & \textbf{28.3} & \textbf{77.5} & \textbf{20.0} & \textbf{30.0} \\
\midrule
\multirow{4}{*}{\textbf{Qwen2.5-Math-7B-Instruct}}   & \textbf{Greedy} & 92.4 & 81.0 & 27.6 & 70.0 & 13.3 & 13.3 \\
                                            & \textbf{EGLR (best $L$)} & 92.9 & 81.2 & 26.8 & 67.5 & 20.0 & 13.3 \\
                                            & \textbf{EGLR-SC$_L$ ($B\!=\!10$)} & 92.8 & 82.8 & 27.9 & 62.5 & 13.3 & 13.3 \\
                                            & \textbf{EGLR-Oracle$_L$} & \textbf{94.5} & \textbf{88.2} & \textbf{32.0} & \textbf{82.5} & \textbf{30.0} & \textbf{26.7} \\
\midrule
\multirow{4}{*}{\textbf{Llama-3.1-8B-Instruct}}      & \textbf{Greedy} & 85.4 & 44.8 & 12.9 & 20.0 & 0.0  & 0.0  \\
                                            & \textbf{EGLR (best $L$)} & 85.0 & 47.4 & 16.9 & 30.0 & 13.3 & 3.3  \\
                                            & \textbf{EGLR-SC$_L$ ($B\!=\!10$)} & 87.9 & 58.0 & 18.0 & 47.5 & 10.0 & 6.7  \\
                                            & \textbf{EGLR-Oracle$_L$} & \textbf{94.8} & \textbf{72.8} & \textbf{27.9} & \textbf{50.0} & \textbf{23.3} & \textbf{6.7}  \\
\midrule
\multirow{4}{*}{\textbf{Mistral-7B-Instruct-v0.3}}   & \textbf{Greedy} & 50.0 & 13.6 & 6.6  & 0.0  & 3.3  & 0.0  \\
                                            & \textbf{EGLR (best $L$)} & 49.8 & 14.8 & 8.1  & 12.5 & 3.3  & 0.0  \\
                                            & \textbf{EGLR-SC$_L$ ($B\!=\!10$)} & 56.6 & 18.8 & 9.2  & 2.5  & 3.3  & 0.0  \\
                                            & \textbf{EGLR-Oracle$_L$} & \textbf{77.3} & \textbf{36.2} & \textbf{17.6} & \textbf{25.0} & \textbf{3.3}  & 0.0  \\
\bottomrule
\end{tabular}%
}
\end{table}

\textbf{(1) EGLR-SC$_L$ improves over greedy on 44/48 cells, never losing.} On MATH-500, gains range from $+1.8$~pp (Qwen2.5-Math-7B) to $+13.2$~pp (Llama-3.1-8B), a universal, training-free lift from the $L$ axis alone.

\textbf{(2) Gains are largest where they matter most.} EGLR-Oracle$_L$ lifts Llama-3.1-8B from $44.8\%$ to $72.8\%$ ($+28.0$~pp) and Qwen2.5-0.5B from $22.0\%$ to $51.4\%$ ($+29.4$~pp) on MATH-500. Even the math-specialized Qwen2.5-Math-7B gains $+7.2$~pp to oracle $88.2\%$. The $L$ axis adds value across the full scale and capability range.

\textbf{(3) EGLR-SC$_L$ outperforms FLOP-matched beam search.} Across the $12$ cells in Table~\ref{tab:beam_comparison}, EGLR-SC$_L$ beats Beam-11 on $11$, with gaps of $2$--$13$~pp. The lone exception (Qwen2.5-7B / AIME24, $16.7\%$ vs.\ $20.0\%$) is a $1$-problem swing on $30$ problems; on AIME25 the same model's Beam-11 drops to $6.7\%$ while EGLR-SC$_L$ holds at $16.7\%$.

\textbf{(4) The $L$-axis pool contains trajectories beam search cannot reach.} EGLR-Oracle$_L$ exceeds Beam-11 on every cell, often by $10$--$25$~pp (e.g., MATH-500 on Qwen2.5-3B: $81.2\%$ vs.\ $66.8\%$; AMC23 on Qwen2.5-7B: $82.5\%$ vs.\ $55.0\%$). The $L$ axis is not a re-discovery of token-level search; it accesses reasoning trajectories the temperature axis does not produce.

\begin{table}[t]
\centering
\caption{EGLR-SC$_L$ vs.\ beam search at matched FLOPs. Beam-11 matches the full $L\!=\!0,\ldots,10$ sweep ${\sim}11\!\times$ greedy). EGLR-SC$_L$ wins $11/12$ cells; the lone exception is a 1-problem swing on a 30-problem set.}
\label{tab:beam_comparison}
\scriptsize
\setlength{\tabcolsep}{3pt}
\resizebox{\linewidth}{!}{%
\begin{tabular}{l l cccccc}
\toprule
\textbf{Model} & \textbf{Method} & \textbf{GSM8K} & \textbf{MATH-500} & \textbf{MinervaMath} & \textbf{AMC23} & \textbf{AIME24} & \textbf{AIME25} \\
\midrule
\multirow{6}{*}{\textbf{Qwen2.5-3B-Instruct}} & \textbf{Greedy} & 83.1 & 64.6 & 15.4 & 37.5 & 6.7  & 3.3  \\
                                     & \textbf{Beam-4} & 82.0 & 65.8 & 17.6 & 52.5 & 10.0 & 3.3  \\
                                     & \textbf{Beam-11} & 82.6 & 66.8 & 17.6 & 47.5 & 10.0 & 3.3  \\
                                     & \textbf{EGLR (best $L$)} & 83.0 & 65.0 & 18.4 & 47.5 & 13.3 & 6.7  \\
                                     & \textbf{EGLR-SC$_L$ ($B\!=\!10$)} & 85.1 & 69.2 & 19.1 & 55.0 & 16.7 & 6.7  \\
                                     & \textbf{EGLR-Oracle$_L$} & \textbf{93.1} & \textbf{81.2} & \textbf{30.9} & \textbf{67.5} & \textbf{20.0} & \textbf{13.3} \\
\midrule
\multirow{6}{*}{\textbf{Qwen2.5-7B-Instruct}} & \textbf{Greedy} & 89.1 & 73.2 & 22.1 & 52.5 & 6.7  & 10.0 \\
                                     & \textbf{Beam-4} & 88.8 & 74.4 & 22.8 & 52.5 & 16.7 & 13.3 \\
                                     & \textbf{Beam-11} & 88.3 & 75.8 & 23.2 & 55.0 & 20.0 & 6.7  \\
                                     & \textbf{EGLR (best $L$)} & 89.2 & 75.2 & 23.9 & 60.0 & 16.7 & 13.3 \\
                                     & \textbf{EGLR-SC$_L$ ($B\!=\!10$)} & 89.7 & 77.8 & 23.5 & 60.0 & 16.7 & 16.7 \\
                                     & \textbf{EGLR-Oracle$_L$} & \textbf{94.1} & \textbf{88.0} & \textbf{32.4} & \textbf{82.5} & \textbf{23.3} & \textbf{26.7} \\
\bottomrule
\end{tabular}%
}
\end{table}

\subsection{Sensitivity to Layer Span $L$}
\label{subsec:results_L}

Table~\ref{tab:L_sweep} (Appendix~\ref{app:per_l}) reports per-$L$ accuracy under greedy decoding for all eight models and six benchmarks. Three observations emerge. \textbf{(1)~No single $L$ dominates}: the best $L$ on MATH-500 is $L\!=\!4$ for Qwen2.5-0.5B, $L\!=\!1$ for Qwen2.5-7B and Llama-3.1-8B, and $L\!=\!10$ for Qwen2.5-Math-7B, confirming that the $L$-axis exposes model- and task-specific structure that self-consistency can exploit. \textbf{(2)~The oracle over $L\!\in\!\{1,\ldots,10\}$ exceeds the best individual $L$ by $10$--$30$~pp} on MATH-500 (e.g., Llama-3.1-8B best-$L$ $47.4\%$ vs.\ oracle $72.8\%$), directly quantifying diversity available from the $L$ axis alone. \textbf{(3)~The pattern is universal}: every model family and scale benefits, including competition-level AIME sets where greedy is often in the single digits.

\subsection{Ablations}
\label{subsec:ablations}

Table~\ref{tab:ablation} ablates $\alpha$ and $K_{\max}$ on MATH-500 (Qwen2.5-3B, $L\!=\!4$). $\tau_H$ is not ablated since it is auto-calibrated and not a free knob.

\begin{table}[!h]
\centering
\caption{Ablation of the two free EGLR hyperparameters (fusion weight $\alpha$ and refinement budget $K_{\max}$) on MATH-500 with Qwen2.5-3B-Instruct, $L=4$. Default values are highlighted. The auto-calibrated entropy threshold $\tau_H$ is not a free knob and is therefore not ablated.}
\label{tab:ablation}
\small
\setlength{\tabcolsep}{8pt}
\begin{tabular}{cc|cc}
\toprule
\multicolumn{2}{c|}{\textbf{Fusion weight $\alpha$}} & \multicolumn{2}{c}{\textbf{Max iterations $K_{\max}$}} \\
\textbf{Value} & \textbf{Acc.\ (\%)} & \textbf{Value} & \textbf{Acc.\ (\%)} \\
\midrule
0.1                    & 65.4 & 1                    & 63.2 \\
\textbf{0.2 (default)} & 65.0 & 2                    & 64.8 \\
0.5                    & 56.6 & \textbf{3 (default)} & 65.0 \\
0.7                    & 10.4 & 4                    & 64.8 \\
0.9                    &  0.6 & 5                    & 64.6 \\
\bottomrule
\end{tabular}
\end{table}

\textbf{(1) $K_{\max}$ has a sweet spot at 3.} Accuracy rises from $K_{\max}\!=\!1$ ($63.2\%$) to $K_{\max}\!=\!3$ ($65.0\%$) then plateaus at $K_{\max}\!=\!4,5$ ($64.8\%$, $64.6\%$). Total variation is only $1.8$~pp, confirming robustness; $K_{\max}\!=\!3$ is the smallest cap that lets KL early-exit fire on most triggered tokens.

\textbf{(2) $\alpha$ must remain small.} Accuracy is stable for $\alpha\!\in\!\{0.1, 0.2\}$ ($65.4\%$, $65.0\%$), drops sharply at $\alpha\!=\!0.5$ ($56.6\%$, $-8.4$~pp), and collapses at $\alpha\!\geq\!0.7$ (near-zero at $0.9$). Large $\alpha$ erases the anchor signal; the recursion then compounds drift errors. Refinement should \emph{nudge} the anchor, not replace it; $\alpha\!\leq\!0.2$ is the safe operating range.
$K_{\max}\!\in\!\{2,3,4\}$ all produce near-identical accuracy; $\alpha$ requires only the conservative-nudge condition. With $\tau_H$ auto-calibrated, EGLR has effectively no free hyperparameters to tune.

\subsection{Aggregation Ceiling Across Axes}\label{subsec:diversity_deepdive}
We conduct the joint-axis study on a single (model, dataset) pair, MATH-500 with Qwen2.5-3B-Instruct, given the GPU compute required for $100$ rollouts per problem in the full $L\!\times\!T$ pool. Oracle accuracy over the three rollout pools is $\mathcal{T}_L\!=\!81.2\%$ ($L\!\in\!\{1,\ldots,10\}$, greedy), $\mathcal{T}_T\!=\!83.4\%$ ($T\!\in\!\{0.1,\ldots,1.0\}$ at $L\!=\!0$), and $\mathcal{T}_{L\times T}\!=\!91.6\%$ (full $10\!\times\!10$ joint pool); the joint pool exceeds the stronger single-axis oracle by $+8.2$~pp. Pairwise disagreement analysis (Appendix~\ref{app:lxt_analysis}) confirms that cross-axis disagreement (mean $78.3/500$) exceeds within-$L$ disagreement ($67.4$) and matches within-$T$ disagreement ($84.4$), establishing that the two axes capture genuinely complementary problems.


\begin{table}[!h]
\centering
\caption{Accuracy (\%) across three rollout pools and three aggregation levels. The placeholder $X$ in each row label is replaced by the column header: $T$, $L$, or $L\!\times\!T$. For example, EGLR-Oracle$_X$ reads as EGLR-Oracle$_T$, EGLR-Oracle$_L$, or EGLR-Oracle$_{L\times T}$ depending on the column. Pool sizes: $10$ configs for $T$ and $L$ (excluding greedy); $100$ configs for $L\!\times\!T$. Greedy ($L\!=\!0, T\!=\!0$) is the shared one-forward-pass baseline.}
\label{tab:eglrsc_grid}
\small
\setlength{\tabcolsep}{8pt}
\begin{tabular}{l c c c}
\toprule
& $\boldsymbol{T}$ & $\boldsymbol{L}$ & $\boldsymbol{L\!\times\!T}$ \\
\midrule
\textbf{Greedy} ($L\!=\!0, T\!=\!0$) & \multicolumn{3}{c}{$64.6$} \\
\midrule
\textbf{EGLR (best $X$)}            & $66.0$ & $65.0$ & $66.8$ \\
\textbf{EGLR-SC$_X$ ($B\!=\!10$)}   & $72.6$ & $69.2$ & $74.2$ \\
\textbf{EGLR-Oracle$_X$}            & $\mathbf{83.4}$ & $\mathbf{81.2}$ & $\mathbf{91.6}$ \\
\bottomrule
\end{tabular}
\end{table}

The growing gap top-to-bottom is the central evidence for the $L\!\times\!T$ sampling space's value: at the single-config level no axis dominates ($\Delta < 1$~pp), but as more rollouts are aggregated only the joint pool keeps unlocking new correct trajectories, widening to a $+8.2$~pp ceiling gap. This is the practical handle for any downstream procedure that consumes per-prompt rollouts (RL group sampling, best-of-$N$ reranking, self-consistency): each gets a richer, structurally distinct rollout source from the same compute budget.

\subsection{Quality Analysis of the $L\!\times\!T$ Sampling Space}

Beyond the aggregate ceilings of Section~\ref{subsec:diversity_deepdive}, we examine the cell-level structure of the same $11\!\times\!11$ joint grid. Figure~\ref{fig:grid_overview} plots per-cell accuracy and the cumulative oracle; Figure~\ref{fig:contribution} dissects per-cell contribution. Pairwise disagreement matrices appear in Appendix~\ref{app:lxt_analysis} (Figure~\ref{fig:axis_disagreement}).

\begin{figure}[!h]
\centering
\begin{subfigure}[t]{0.49\linewidth}
\centering
\includegraphics[width=\linewidth]{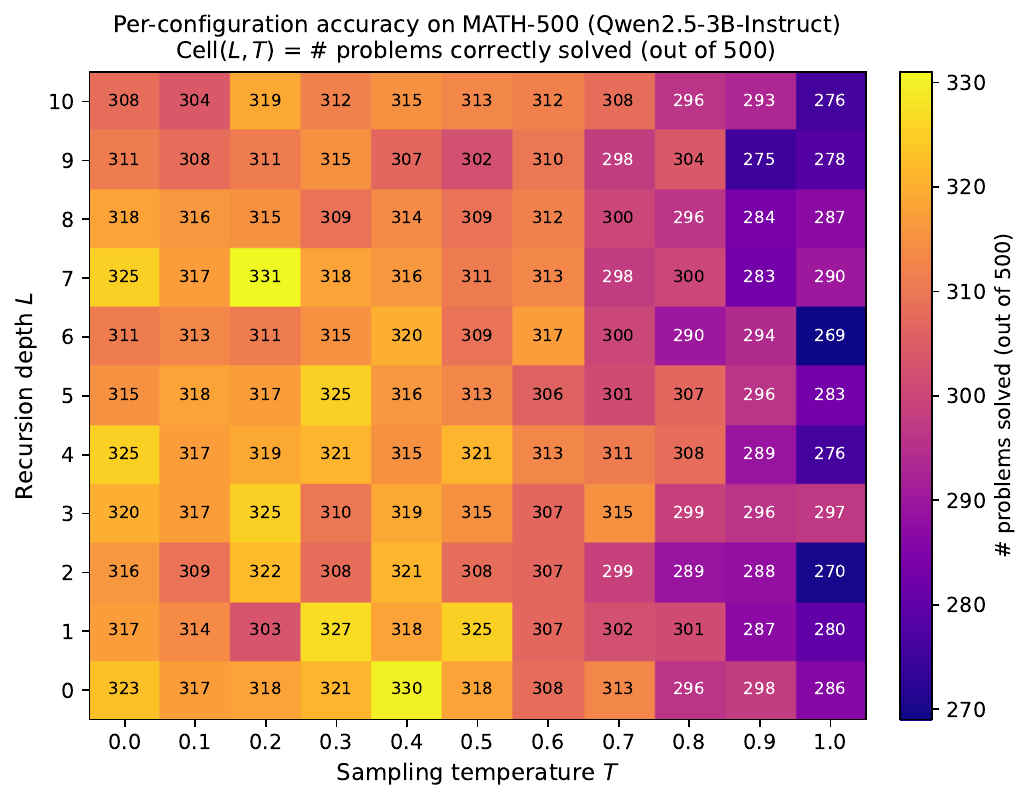}
\caption{Per-configuration accuracy (out of 500).}
\label{fig:per_config_acc}
\end{subfigure}\hfill
\begin{subfigure}[t]{0.49\linewidth}
\centering
\includegraphics[width=\linewidth]{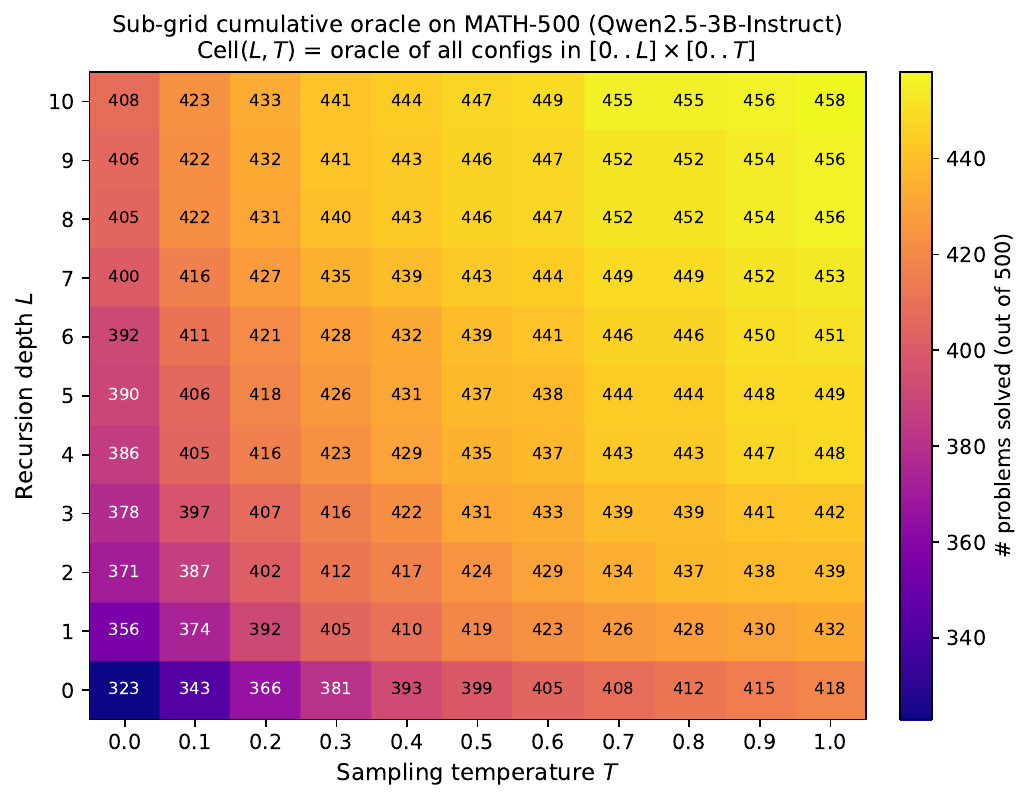}
\caption{Sub-grid cumulative oracle.}
\label{fig:oracle_heatmap}
\end{subfigure}
\caption{The $11\!\times\!11$ joint grid (MATH-500, Qwen2.5-3B-Instruct). Rows: $L\!\in\!\{0,\ldots,10\}$; columns: $T\!\in\!\{0.0,\ldots,1.0\}$. \textbf{(a)} Correct problems per $(L,T)$; no single cell dominates ($53.8$--$66.2\%$). \textbf{(b)} Cumulative oracle over $[0..L]\!\times\![0..T]$; corners: greedy ($323$), $T$-only ($418$), $L$-only ($408$), joint ($458$).}
\label{fig:grid_overview}
\end{figure}

\textbf{Where in the space does correctness live?}
Figure~\ref{fig:contribution} dissects the per-configuration contribution. 
Panel~(a) plots the \emph{exclusive} contribution: the number of problems 
that \emph{only} that single $(L, T)$ configuration solves. Across the entire grid, just $11$ problems are uniquely solved, meaning no individual configuration is \emph{irreplaceable}. Panel~(b) plots the \emph{marginal-over-greedy} contribution: every non-greedy cell rescues $20$--$44$ problems greedy fails on.

\begin{figure}[!h]
\centering
\includegraphics[width=\linewidth]{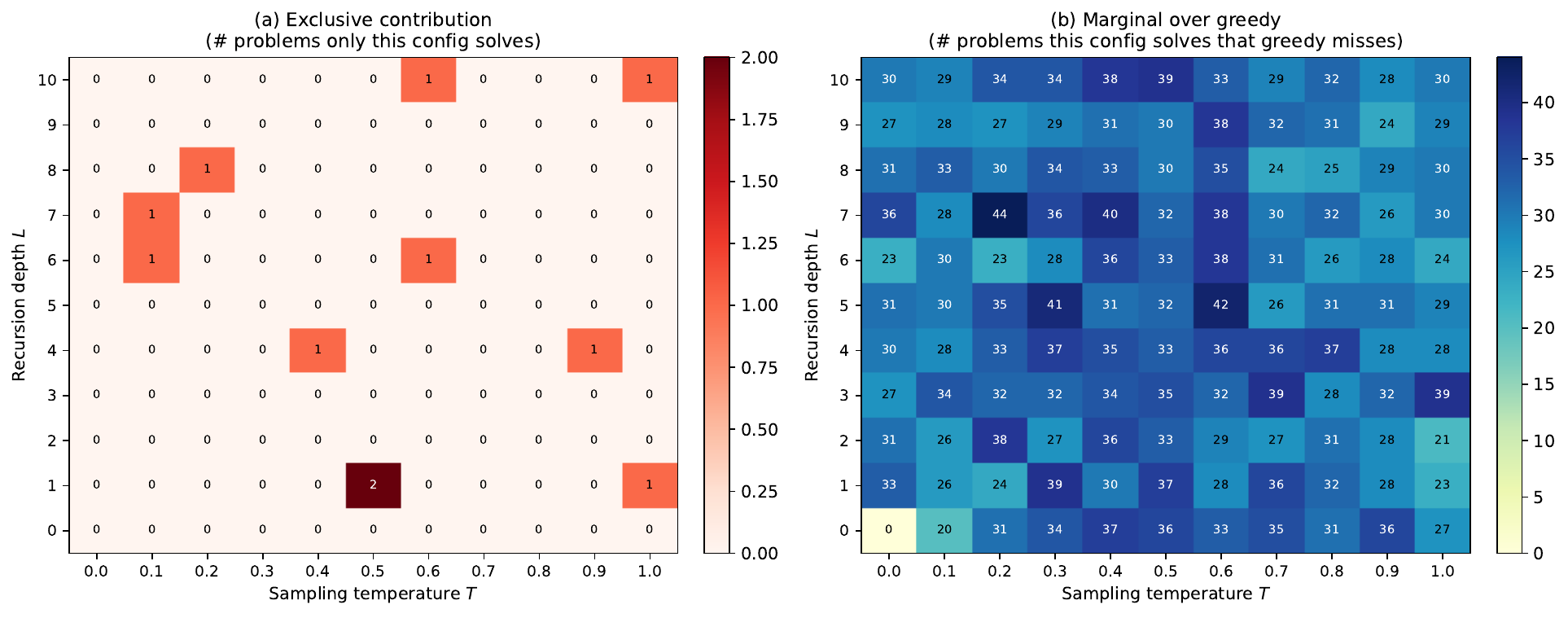}
\caption{Per-configuration contribution (same setup as Fig.~\ref{fig:grid_overview}). Cell$(L,T) = $ \# problems solved \textbf{(a)} \emph{only} by that config (almost all cells are zero), or \textbf{(b)} by that config but missed by greedy ($20$--$44$ per non-greedy cell).}
\label{fig:contribution}
\end{figure}

\section{Limitations}
\label{sec:limitations}

Three limitations are worth noting.
\textbf{Individual configurations show mixed gains.}
A single $L$ without aggregation sometimes degrades accuracy (e.g., Qwen2.5-3B / MATH-500 at $L\!=\!10$: $61.6\%$ vs.\ greedy $64.6\%$). The reliable gain comes from aggregating over the $L$-axis pool via EGLR-SC. \textbf{KL early-exit saturates at $K_{\max}\!=\!3$.} In practice, mean recursion depth is close to $K_{\max}$ on most cells, indicating the need for a better early stoppage criterion. As the ablation in Table~\ref{tab:ablation} shows, extending $K_{\max}$ beyond 3 yields no accuracy gain and introduces over-refinement.
\textbf{$L\!\times\!T$ pool analysis is limited to one model-dataset pair.}
A full characterization of the $L\!\times\!T$ sampling space across all
8 models and 6 benchmarks, each requiring 100 rollouts per problem is
prohibitively compute-intensive at the scale of this work.

\section{Conclusion}
\label{sec:conclusion}
We introduced \emph{Entropy-Gated Latent Recursion} (EGLR), a training-free 
inference-time procedure that recursively re-applies a frozen model's top-$L$ 
transformer layers at high-uncertainty tokens. Varying the layer span $L$ 
defines a fully \emph{deterministic} sampling axis complementary to 
temperature; combined with $T$ temperature samples, EGLR turns the 
conventional one-axis stochastic rollout pool into an \emph{$L\!\times\!T$ 
Cartesian sampling space} at almost the same per-rollout cost. On MATH-500 
with Qwen2.5-3B-Instruct, the joint $L\!\times\!T$ oracle reaches $91.6\%$, 
$+8.2$ pp beyond the temperature-only oracle ($83.4\%$), evidencing that the addition of $L$ axis captures complementary problems from the $T$ axis. As a deployable consumer of this pool, we instantiated EGLR-SC, a self-consistency aggregator that improves over greedy on $44/48$ (model, dataset) cells and beats FLOP-matched beam search on $11/12$ comparison cells.

\paragraph{Future work.}
Three directions stand out. First, the $L\!\times\!T$ pool is a natural
drop-in for GRPO-style RL~\citep{shao2024deepseekmath}: $L\!\times\!T$
rollouts enrich the reward signal at almost no extra per-rollout cost, with
the deterministic $L$ axis supplying a reproducible subset unavailable from
temperature sampling alone. Second, EGLR-SC uses plain majority voting and
recovers only ${\sim}28\%$ of the oracle gap; \emph{adaptive trajectory
selection} via model-internal signals (entropy, KL speed, hidden-state
agreement across $L$) or a lightweight scorer is a natural next step.
Third, a full characterization of per-model oracle ceilings across all
model--dataset pairs and a mechanistic study of fusion dynamics across $L$
configurations remain to be explored.

\bibliographystyle{plainnat}
\bibliography{custom}

\newpage
\appendix

\section{Practical Implications of the Cartesian Rollout Space}
\label{app:practical}

The $L\!\times\!T$ Cartesian rollout pool established in Section~\ref{subsec:eglrsc} has implications beyond inference-time accuracy alone. We highlight two illustrative settings.

\textbf{(a) Test-time best-of-$M$ at fixed compute.}
Where standard self-consistency must increase the sample count linearly to expand the candidate pool, the $L\!\times\!T$ space recovers a pool 
of $M_L$ layer configurations $\times$ $M_T$ temperature samples at the 
per-rollout cost given by Eq.~\eqref{eq:cost}. This is particularly 
impactful in the small-sample regime ($M\!=\!4$ or $M\!=\!8$), where 
stochastic-only ensembles concentrate on a small number of dominant 
trajectories and frequently miss alternative reasoning paths. The $L$ axis 
introduces no additional randomness and requires no extra generated tokens 
per rollout, so the expanded pool comes at the overhead already accounted 
for in Eq.~\eqref{eq:cost}. Crucially, because $L$ is a discrete 
hyperparameter while temperature is continuous, the two axes explore the 
rollout space in structurally distinct ways: varying $L$ moves through 
the model's hidden-state geometry deterministically, while varying 
temperature redistributes probability mass stochastically. The two axes 
are therefore non-redundant in pratice, a property we confirm 
empirically in Section~\ref{sec:experiments}, where adding the $L$ axis 
to a fixed temperature ensemble improves oracle accuracy by 8\% on 
MATH-500.

\textbf{(b) Diverse rollout generation for outcome-supervised RL.}
Recent reasoning-focused reinforcement-learning pipelines such as 
GRPO~\citep{shao2024deepseekmath} estimate per-prompt advantages from a 
group of rollouts sampled at the same input, where rollout diversity 
directly governs the informativeness of the resulting gradient signal. 
Standard implementations draw rollouts from temperature sampling alone, 
whose diversity is bounded by the model's stochastic output distribution. 
The $\mathcal{T}_{L \times T}$ Cartesian grid provides an $L\!\times\!T$ rollout pool of structurally distinct candidates at almost the same per-rollout cost, and our
empirical results (Section~\ref{sec:experiments}) show that this 
expansion yields non-redundant candidate answers rather than 
near-duplicate trajectories. Furthermore, the deterministic $L$ axis 
makes a subset of these rollouts exactly reproducible across optimization 
steps, a property that pure temperature sampling cannot provide. We do 
not pursue an RL training experiment in this paper, but the observed 
diversity gains position the Cartesian rollout pool as a natural 
candidate for the rollout-collection stage of GRPO-style pipelines, and 
we view a focused empirical study of this connection as a promising 
direction for future work.

\section{Per-$L$ Accuracy Analysis}
\label{app:per_l}

Table~\ref{tab:L_sweep} reports per-$L$ accuracy under greedy decoding across all eight models and all six benchmarks, alongside the cross-$L$ oracle (last column).

\textbf{(1) No single $L$ dominates.} For every (model, dataset) cell in the table, the best individual $L$ value differs: on MATH-500 alone, the best $L$ is $L{=}4$ for Qwen2.5-0.5B but $L{=}5$ for Qwen2.5-Math-1.5B, $L{=}1$ for Qwen2.5-7B and Llama-3.1-8B, $L{=}7$ for Qwen2.5-14B, and $L{=}10$ for Qwen2.5-Math-7B. There is no globally optimal layer span: each model and benchmark calls for a different $L$. This is precisely the structural-diversity property a self-consistency aggregator can exploit.

\textbf{(2) The $L$-axis sampling space contains correct trajectories not reachable from any single $L$ configuration.} Across all 48 (model, dataset) cells, the oracle column is strictly larger than the best individual $L$ accuracy, often by $10$--$30$ percentage points (e.g., Llama-3.1-8B on MATH-500: best-$L$ $47.4\%$ vs.\ oracle $72.8\%$; Qwen2.5-0.5B on AMC23: best-$L$ $12.5\%$ vs.\ oracle $40.0\%$).

\textbf{(3) The pattern is universal across families and scales.} The $L$ axis lifts every model family in our suite on nearly every benchmark, including the strict AIME competition sets where many models score in the single digits under greedy decoding.

\begin{table}[!h]
\centering
\caption{Per-$L$ accuracy (\%) under greedy decoding across all six benchmarks. \textbf{EGLR-Oracle$_L$}: union of correctly-solved problems across $L \in \{1, \ldots, 10\}$. The oracle gap (EGLR-Oracle$_L$ $-$ Greedy) characterizes the diversity available from varying the layer span alone.}
\label{tab:L_sweep}
\scriptsize
\setlength{\tabcolsep}{2.5pt}
\begin{tabular}{l c|cccccccccc|c}
\toprule
\textbf{Model} & \textbf{Greedy} & $\mathbf{L{=}1}$ & $\mathbf{L{=}2}$ & $\mathbf{L{=}3}$ & $\mathbf{L{=}4}$ & $\mathbf{L{=}5}$ & $\mathbf{L{=}6}$ & $\mathbf{L{=}7}$ & $\mathbf{L{=}8}$ & $\mathbf{L{=}9}$ & $\mathbf{L{=}10}$ & \textbf{EGLR-Oracle$_L$} \\
\midrule
\multicolumn{13}{c}{\textbf{MATH-500}} \\
\midrule
\textbf{Qwen2.5-0.5B-Instruct}      & 22.0 & 22.4 & 22.8 & 23.8 & 25.6 & 24.6 & 20.8 & 22.0 & 18.8 & 24.2 & 21.4 & \textbf{51.4} \\
\textbf{Qwen2.5-3B-Instruct}        & 64.6 & 63.4 & 63.2 & 64.0 & 65.0 & 63.0 & 62.2 & 65.0 & 63.6 & 62.2 & 61.6 & \textbf{81.2} \\
\textbf{Qwen2.5-7B-Instruct}        & 73.2 & 75.2 & 73.4 & 74.2 & 72.2 & 74.0 & 71.6 & 72.4 & 74.0 & 72.4 & 73.0 & \textbf{88.0} \\
\textbf{Qwen2.5-14B-Instruct}       & 77.0 & 76.4 & 77.2 & 75.2 & 76.6 & 76.0 & 76.8 & 77.4 & 76.8 & 76.2 & 76.6 & \textbf{86.2} \\
\textbf{Qwen2.5-Math-1.5B-Instruct} & 70.4 & 70.8 & 71.0 & 70.4 & 71.8 & 72.6 & 70.0 & 71.0 & 72.2 & 72.0 & 69.8 & \textbf{84.4} \\
\textbf{Qwen2.5-Math-7B-Instruct}   & 81.0 & 81.0 & 80.6 & 80.8 & 79.8 & 79.0 & 80.8 & 80.2 & 78.4 & 80.6 & 81.2 & \textbf{88.2} \\
\textbf{Llama-3.1-8B-Instruct}      & 44.8 & 47.4 & 44.0 & 46.2 & 45.8 & 42.8 & 43.2 & 44.6 & 45.4 & 43.6 & 43.4 & \textbf{72.8} \\
\textbf{Mistral-7B-Instruct-v0.3}   & 13.6 & 12.8 & 12.8 & 12.4 & 12.6 & 14.8 & 11.4 & 13.4 & 13.8 & 11.0 & 13.4 & \textbf{36.2} \\
\midrule
\multicolumn{13}{c}{\textbf{GSM8K}} \\
\midrule
\textbf{Qwen2.5-0.5B-Instruct}      & 40.5 & 41.0 & 37.8 & 40.0 & 39.9 & 38.6 & 37.8 & 37.8 & 36.5 & 37.1 & 37.9 & \textbf{74.0} \\
\textbf{Qwen2.5-3B-Instruct}        & 83.1 & 82.3 & 82.3 & 82.3 & 82.8 & 82.5 & 81.9 & 82.0 & 83.0 & 82.6 & 81.2 & \textbf{93.1} \\
\textbf{Qwen2.5-7B-Instruct}        & 89.1 & 88.2 & 89.2 & 88.9 & 88.4 & 88.6 & 88.2 & 87.8 & 88.2 & 88.8 & 88.3 & \textbf{94.1} \\
\textbf{Qwen2.5-14B-Instruct}       & 92.4 & 92.6 & 91.7 & 93.0 & 92.9 & 92.3 & 92.7 & 92.8 & 92.6 & 92.6 & 92.7 & \textbf{96.3} \\
\textbf{Qwen2.5-Math-1.5B-Instruct} & 83.3 & 82.4 & 84.2 & 83.8 & 83.3 & 83.2 & 82.4 & 84.1 & 83.6 & 81.5 & 82.4 & \textbf{92.5} \\
\textbf{Qwen2.5-Math-7B-Instruct}   & 92.4 & 92.5 & 92.5 & 92.5 & 92.1 & 91.9 & 92.5 & 92.2 & 92.3 & 92.3 & 92.9 & \textbf{94.5} \\
\textbf{Llama-3.1-8B-Instruct}      & 85.4 & 83.9 & 83.7 & 84.5 & 85.0 & 84.8 & 84.5 & 84.5 & 84.4 & 84.7 & 84.6 & \textbf{94.8} \\
\textbf{Mistral-7B-Instruct-v0.3}   & 50.0 & 48.2 & 47.5 & 49.0 & 48.0 & 48.7 & 46.7 & 47.5 & 49.1 & 49.8 & 47.4 & \textbf{77.3} \\
\midrule
\multicolumn{13}{c}{\textbf{MinervaMath}} \\
\midrule
\textbf{Qwen2.5-0.5B-Instruct}      & 2.6  & 3.7  & 3.3  & 2.2  & 4.8  & 1.8  & 3.7  & 2.2  & 4.0  & 4.8  & 3.7  & \textbf{12.1} \\
\textbf{Qwen2.5-3B-Instruct}        & 15.4 & 16.5 & 15.8 & 17.3 & 15.4 & 15.8 & 16.9 & 16.2 & 18.0 & 18.4 & 18.0 & \textbf{30.9} \\
\textbf{Qwen2.5-7B-Instruct}        & 22.1 & 21.7 & 20.2 & 21.0 & 22.4 & 21.0 & 23.9 & 21.0 & 22.8 & 21.0 & 18.8 & \textbf{32.4} \\
\textbf{Qwen2.5-14B-Instruct}       & 27.2 & 27.6 & 27.9 & 23.9 & 25.4 & 28.3 & 28.7 & 29.0 & 27.2 & 26.8 & 27.9 & \textbf{35.3} \\
\textbf{Qwen2.5-Math-1.5B-Instruct} & 19.5 & 19.1 & 19.1 & 17.6 & 18.8 & 18.8 & 17.3 & 18.8 & 17.6 & 17.6 & 17.6 & \textbf{28.3} \\
\textbf{Qwen2.5-Math-7B-Instruct}   & 27.6 & 26.1 & 26.8 & 24.6 & 26.8 & 25.7 & 25.4 & 25.4 & 24.3 & 25.4 & 24.3 & \textbf{32.0} \\
\textbf{Llama-3.1-8B-Instruct}      & 12.9 & 13.6 & 12.1 & 12.1 & 12.9 & 13.6 & 12.5 & 13.6 & 12.9 & 16.9 & 14.7 & \textbf{27.9} \\
\textbf{Mistral-7B-Instruct-v0.3}   & 6.6  & 5.9  & 7.0  & 6.2  & 6.2  & 4.4  & 8.1  & 5.5  & 4.0  & 5.1  & 6.2  & \textbf{17.6} \\
\midrule
\multicolumn{13}{c}{\textbf{AMC23}} \\
\midrule
\textbf{Qwen2.5-0.5B-Instruct}      & 10.0 & 10.0 & 12.5 & 12.5 & 2.5  & 10.0 & 12.5 & 5.0  & 12.5 & 10.0 & 7.5  & \textbf{40.0} \\
\textbf{Qwen2.5-3B-Instruct}        & 37.5 & 37.5 & 45.0 & 40.0 & 47.5 & 37.5 & 42.5 & 37.5 & 45.0 & 35.0 & 45.0 & \textbf{67.5} \\
\textbf{Qwen2.5-7B-Instruct}        & 52.5 & 60.0 & 52.5 & 47.5 & 45.0 & 50.0 & 60.0 & 55.0 & 50.0 & 50.0 & 52.5 & \textbf{82.5} \\
\textbf{Qwen2.5-14B-Instruct}       & 67.5 & 60.0 & 67.5 & 60.0 & 57.5 & 60.0 & 62.5 & 65.0 & 65.0 & 67.5 & 62.5 & \textbf{82.5} \\
\textbf{Qwen2.5-Math-1.5B-Instruct} & 55.0 & 62.5 & 50.0 & 45.0 & 47.5 & 60.0 & 62.5 & 52.5 & 57.5 & 50.0 & 55.0 & \textbf{77.5} \\
\textbf{Qwen2.5-Math-7B-Instruct}   & 70.0 & 62.5 & 57.5 & 62.5 & 67.5 & 60.0 & 60.0 & 55.0 & 57.5 & 62.5 & 52.5 & \textbf{82.5} \\
\textbf{Llama-3.1-8B-Instruct}      & 20.0 & 30.0 & 20.0 & 20.0 & 27.5 & 22.5 & 22.5 & 27.5 & 15.0 & 22.5 & 25.0 & \textbf{50.0} \\
\textbf{Mistral-7B-Instruct-v0.3}   & 0.0  & 0.0  & 7.5  & 0.0  & 2.5  & 0.0  & 0.0  & 7.5  & 2.5  & 12.5 & 2.5  & \textbf{25.0} \\
\midrule
\multicolumn{13}{c}{\textbf{AIME24}} \\
\midrule
\textbf{Qwen2.5-0.5B-Instruct}      & 0.0  & 3.3  & 0.0  & 0.0  & 0.0  & 0.0  & 0.0  & 0.0  & 0.0  & 0.0  & 0.0  & \textbf{3.3} \\
\textbf{Qwen2.5-3B-Instruct}        & 6.7  & 13.3 & 6.7  & 6.7  & 10.0 & 0.0  & 3.3  & 6.7  & 10.0 & 10.0 & 6.7  & \textbf{20.0} \\
\textbf{Qwen2.5-7B-Instruct}        & 6.7  & 13.3 & 13.3 & 10.0 & 10.0 & 13.3 & 6.7  & 3.3  & 13.3 & 13.3 & 16.7 & \textbf{23.3} \\
\textbf{Qwen2.5-14B-Instruct}       & 10.0 & 13.3 & 16.7 & 13.3 & 16.7 & 10.0 & 16.7 & 16.7 & 20.0 & 13.3 & 13.3 & \textbf{26.7} \\
\textbf{Qwen2.5-Math-1.5B-Instruct} & 13.3 & 10.0 & 10.0 & 13.3 & 6.7  & 10.0 & 10.0 & 6.7  & 10.0 & 10.0 & 6.7  & \textbf{20.0} \\
\textbf{Qwen2.5-Math-7B-Instruct}   & 13.3 & 10.0 & 6.7  & 6.7  & 13.3 & 13.3 & 13.3 & 20.0 & 16.7 & 16.7 & 13.3 & \textbf{30.0} \\
\textbf{Llama-3.1-8B-Instruct}      & 0.0  & 13.3 & 6.7  & 6.7  & 6.7  & 3.3  & 3.3  & 3.3  & 6.7  & 3.3  & 3.3  & \textbf{23.3} \\
\textbf{Mistral-7B-Instruct-v0.3}   & 3.3  & 0.0  & 0.0  & 0.0  & 0.0  & 0.0  & 0.0  & 0.0  & 0.0  & 3.3  & 0.0  & \textbf{3.3} \\
\midrule
\multicolumn{13}{c}{\textbf{AIME25}} \\
\midrule
\textbf{Qwen2.5-0.5B-Instruct}      & 0.0  & 0.0  & 0.0  & 0.0  & 0.0  & 0.0  & 0.0  & 0.0  & 0.0  & 0.0  & 0.0  & 0.0 \\
\textbf{Qwen2.5-3B-Instruct}        & 3.3  & 3.3  & 3.3  & 0.0  & 0.0  & 3.3  & 6.7  & 3.3  & 3.3  & 3.3  & 0.0  & \textbf{13.3} \\
\textbf{Qwen2.5-7B-Instruct}        & 10.0 & 10.0 & 6.7  & 10.0 & 3.3  & 10.0 & 0.0  & 13.3 & 0.0  & 6.7  & 3.3  & \textbf{26.7} \\
\textbf{Qwen2.5-14B-Instruct}       & 16.7 & 10.0 & 10.0 & 6.7  & 13.3 & 16.7 & 13.3 & 10.0 & 13.3 & 20.0 & 16.7 & \textbf{26.7} \\
\textbf{Qwen2.5-Math-1.5B-Instruct} & 13.3 & 10.0 & 10.0 & 6.7  & 10.0 & 13.3 & 20.0 & 10.0 & 20.0 & 20.0 & 3.3  & \textbf{30.0} \\
\textbf{Qwen2.5-Math-7B-Instruct}   & 13.3 & 6.7  & 10.0 & 13.3 & 13.3 & 10.0 & 10.0 & 10.0 & 13.3 & 13.3 & 13.3 & \textbf{26.7} \\
\textbf{Llama-3.1-8B-Instruct}      & 0.0  & 0.0  & 0.0  & 3.3  & 3.3  & 0.0  & 3.3  & 0.0  & 0.0  & 3.3  & 3.3  & \textbf{6.7} \\
\textbf{Mistral-7B-Instruct-v0.3}   & 0.0  & 0.0  & 0.0  & 0.0  & 0.0  & 0.0  & 0.0  & 0.0  & 0.0  & 0.0  & 0.0  & 0.0 \\
\bottomrule
\end{tabular}
\end{table}

\section{Full $L\!\times\!T$ Sampling Space Characterization}
\label{app:lxt_analysis}

\textbf{Are the axes complementary?}
The narrow accuracy band in Figure~\ref{fig:per_config_acc} could in 
principle be consistent with two very different scenarios: (i) configurations 
are roughly equivalent and solve essentially the same problems, or (ii) 
configurations have similar accuracy but solve \emph{different} problems. 
Only the latter implies real diversity. Figure~\ref{fig:axis_disagreement} 
discriminates between these by directly measuring pairwise disagreement, 
defined as the number of problems on which exactly one of two configurations 
is correct (i.e.\ the symmetric difference of their correct-sets). We slice the grid along three axes:
\begin{itemize}\itemsep=0pt
    \item \textbf{(a) $L$ vs $L$ (greedy, $T\!=\!0$).} Mean off-diagonal disagreement is $67.4$ out of $500$ ($13.5\%$), with maximum $81$.
    \item \textbf{(b) $T$ vs $T$ ($L\!=\!0$, no refinement).} Mean disagreement is $84.4$ ($16.9\%$), with extreme temperatures reaching $103$.
    \item \textbf{(c) $T$ vs $L$ (cross-axis).} Mean disagreement is $78.3$, with maximum $111$, meaningfully larger than the within-$L$ mean.
\end{itemize}
The cross-axis disagreement exceeds within-$L$ values and matches within-$T$ values, confirming that the two axes catch genuinely different problems.

\begin{figure}[!h]
\centering
\includegraphics[width=\linewidth]{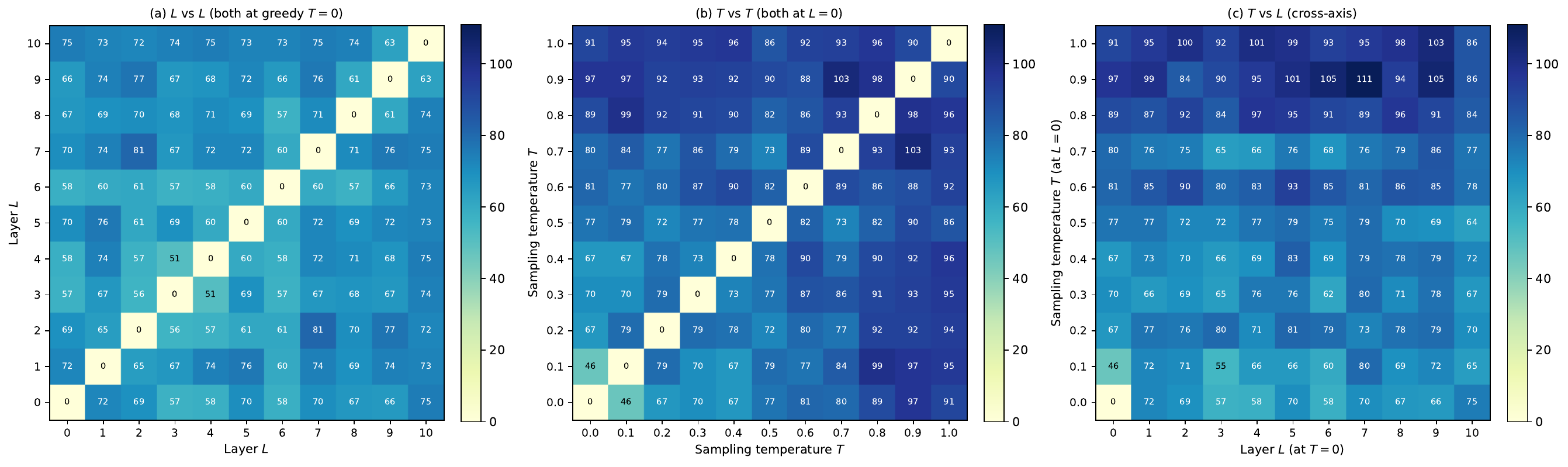}
\caption{Pairwise configuration disagreement on MATH-500 (Qwen2.5-3B-Instruct). Cell values count the problems on which exactly one of the two compared configurations is correct (out of 500); higher = more complementary. \textbf{(a) $L$ vs $L$:} pairs of layer spans at greedy $T\!=\!0$. \textbf{(b) $T$ vs $T$:} pairs of sampling temperatures at $L\!=\!0$. \textbf{(c) $T$ vs $L$:} cross-axis comparison. Cross-axis disagreements (c) are comparable to within-$T$ disagreements (b) and larger than within-$L$ disagreements (a).}
\label{fig:axis_disagreement}
\end{figure}



\section{Proof of Proposition~\ref{prop:distinctness} (Trajectory Distinctness)}
\label{app:proof}

\begin{proposition}[Trajectory Distinctness]
\label{prop:distinctness}
Let $f_\theta$ be a frozen autoregressive transformer with $N$ decoder 
layers, and let $L_1 \neq L_2 \in \{1, \ldots, L_{\max}\}$ be two distinct 
layer configurations. Define the residual stream increment at layer $\ell$ 
as $\Delta_t^{(\ell)} = h_t^{(\ell)} - h_t^{(\ell-1)}$, and let 
$\mathcal{R}_{L} : \mathbb{R}^d \to \mathbb{R}^d$ denote the operator that 
applies layers $\{N-L+1, \ldots, N\}$ of $f_\theta$ to a hidden state and 
returns the pre-final-norm output. Suppose at least one of the following 
holds:
\begin{enumerate}[label=(\roman*)]
    \item \textbf{Non-degeneracy:} Assuming WLOG $L_1 > L_2$,
    $\displaystyle\sum_{\ell=N-L_1+1}^{N-L_2} \Delta_t^{(\ell)} \neq 
    \mathbf{0}$ for at least one token position $t$.
    \item \textbf{Operator distinguishability:} There exists $h \in 
    \mathbb{R}^d$ such that $\mathcal{R}_{L_1}(h) \neq \mathcal{R}_{L_2}(h)$.
\end{enumerate}
Then there exists at least one token position $t$ at which 
$p_t^{\star}(L_1) \neq p_t^{\star}(L_2)$, and the two EGLR generation 
trajectories are distributionally distinct.
\end{proposition}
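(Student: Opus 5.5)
The plan is to localize the comparison at the \emph{first} token position where the entry gate \eqref{eq:gate} fires, and to show that the two refined distributions differ there. Everything before that position is decoded identically under $L_1$ and $L_2$. Greedy (or temperature) decoding uses the unrefined $p_t$ from the same frozen $f_\theta$, and the gate depends only on $H(p_t)$, which is independent of $L$. So the two runs share the same prefix $x_{<t}$ up to the first triggered position $t$. In particular, all hidden states $h_t^{(\ell)}$ coincide across the two runs. I will assume such a $t$ exists, i.e.\ that the gate fires at least once; otherwise EGLR reduces to greedy for both $L$ and the claim is false, so this has to be made explicit as part of the hypotheses. I will also take the position $t$ in (i) to be this triggered position.

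At that $t$, I compare the first fused inputs \eqref{eq:fuse}. Write $a_i = h_t^{(N-L_i)}$ and $u = h_t^{(N)}/\|h_t^{(N)}\|$. By \eqref{eq:normmatch}, $\hat h_t^{(N,0)} = \|a_i\|\,u$. Hence
\[
h^{\mathrm{in},(1)}(L_2) - h^{\mathrm{in},(1)}(L_1) = (1-\alpha)(a_2 - a_1) + \alpha\bigl(\|a_2\|-\|a_1\|\bigr)u ,
\]
and $a_2 - a_1 = \sum_{\ell=N-L_1+1}^{N-L_2}\Delta_t^{(\ell)}$. Under (i) this sum is nonzero. The difference above can then vanish only if $a_2-a_1$ is collinear with $u$ with one specific coefficient, which is a codimension-$(d-1)$ coincidence; I would either exclude it as part of non-degeneracy or note that it fails for generic weights. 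Next, I push the difference through $\mathcal{R}_{L}$ using the factorization $\mathcal{R}_{L_1} = \mathcal{R}_{L_2}\circ G$, where $G$ applies layers $N-L_1+1,\dots,N-L_2$. This reduces the comparison to $\mathcal{R}_{L_2}(G(x_1))$ versus $\mathcal{R}_{L_2}(x_2)$. Under (ii), operator distinguishability supplies an input on which the two maps disagree. The composite maps are real-analytic in $h$ (attention, MLP with smooth activations, RMSNorm away from $0$), so the disagreement set is open and dense. The actual fused inputs then give distinct pre-norm outputs $h_t^{(N,1)}$ except on a null set.

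The step I expect to be the main obstacle is the last map, $h\mapsto \mathrm{softmax}(W_o\,\mathrm{LN}_f(h))$, which is not injective. It identifies positive rescalings (through the norm), directions in $\ker W_o$, and shifts of the logits by multiples of $\mathbf{1}$. I would handle this by the same genericity argument: the preimage of a point is a lower-dimensional analytic set, so distinct $h_t^{(N,1)}$ give distinct $p_t^{(1)}$ except on a null set. KL early exit and the remaining iterations preserve the distinction, because the final $p_t^\star$ is again an analytic function of the inputs. Honestly, the proposition is only true in this ``generic / almost-every'' sense, and I would state it that way.

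Finally, I pass from $p_t^\star(L_1)\neq p_t^\star(L_2)$ to distinctness of trajectories. For $T>0$, the common prefix $x_{<t}$ has the same positive probability under both runs, while the conditional law of $y_t$ differs (tempered top-$k$ softmax is injective up to the same null identifications). The laws of the full trajectories therefore differ. For $T=0$, distinctness of the trajectories requires the argmax to flip, which needs an additional margin condition at $t$. Without it, only the distributions $p_t^\star$ are guaranteed to differ, not the greedy outputs.
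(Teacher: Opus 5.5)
Your skeleton matches the paper's proof. You compare the $k=1$ fused inputs at a position where the anchors differ, push the difference through $\mathcal{R}_L$ and the readout $h\mapsto\mathrm{softmax}(W_o\,\mathrm{LN}_{\mathrm{f}}(h))$, and then argue that the trajectories separate. The paper treats every link as automatic, and each place where you add a genericity caveat or an extra hypothesis is a place where its argument actually breaks:
\begin{itemize}
\item It concludes $u_1\neq u_2$ because ``both terms differ.'' This ignores exactly the cancellation $(1-\alpha)(a_2-a_1)=-\alpha(\|a_2\|-\|a_1\|)\,u$ you isolate.
\item It passes from $u_1\neq u_2$ to distinct pre-norm outputs with no argument, although these are two different maps applied to two different inputs.
\item For the readout it invokes full row rank of $W_o$, which is impossible for $W_o\in\mathbb{R}^{V\times d}$ with $V>d$. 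It also claims $\mathrm{LN}_{\mathrm{f}}$ and softmax ``preserve distinctness,'' which fails because of the scale and shift invariances you list.
\item In Case (ii) it applies the existential $h$ of assumption (ii) to the particular fused input $u$.
\item It never mentions the entropy gate, so the position supplied by (i) need not be one where refinement runs at all.
\item Its last step asserts that any decoding scheme selects different tokens with non-zero probability, which fails for greedy decoding without the argmax flip you require.
\end{itemize}
Under greedy decoding on a prompt where the gate never fires, the two runs are identical while (i) still holds. So the trajectory claim is false without extra hypotheses. Your triggered-position assumption, the margin condition for $T=0$, and the almost-everywhere qualifier are the right corrections, not weaknesses of your plan.

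What still needs tightening is the genericity argument itself.
\begin{itemize}
\item Say what the null set lives in; the natural choice is the weights $\theta$. An analytic-genericity argument then needs an explicit witness $\theta$ at which the relevant analytic function is nonzero.
\item Hypothesis (ii) does not supply such a witness. It compares the two operators on a \emph{common} input, whereas your comparison is $\mathcal{R}_{L_2}(G(x_1))$ versus $\mathcal{R}_{L_2}(x_2)$ with $x_1\neq x_2$. At a fixed $\theta$, an open dense disagreement set says nothing about the particular points the forward pass produces. In your proof, (i) and (ii) therefore do almost no work and genericity does all of it; state the result that way.
\item The maps are only piecewise analytic. The gate and the KL stopping rule are threshold conditions, and the two runs may stop at different indices $k^\star_1\neq k^\star_2$. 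The analytic argument has to be run on each piece and for each pair of stopping indices.
\item Top-$k$ truncation discards every logit outside the retained set, so tempered top-$k$ sampling is not injective up to your null identifications. Hence $p_t^\star(L_1)\neq p_t^\star(L_2)$ gives distinct sampling laws only if the difference survives on the retained coordinates, which needs one more genericity step.
\end{itemize}
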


\begin{proof}
Assume WLOG $L_1 > L_2$.

\textbf{Case (i).}
By assumption (i), the anchor states differ at some $t$:
\begin{equation}
    h_t^{(N-L_1)} - h_t^{(N-L_2)} 
    \;=\; \sum_{\ell=N-L_1+1}^{N-L_2} \Delta_t^{(\ell)} \;\neq\; \mathbf{0}.
    \label{eq:anchor_gap}
\end{equation}
Since the norm-matching factor $\|h_t^{(N-L)}\|/\|h_t^{(N)}\|$ depends on 
the anchor norm, it too differs across $L_1$ and $L_2$. Here 
$\hat{h}_t^{(N,0)}(L)$ denotes the norm-matched initial iterate, defined as
\begin{equation}
    \hat{h}_t^{(N,0)}(L) \;=\; h_t^{(N)} \cdot 
    \frac{\|h_t^{(N-L)}\|}{\|h_t^{(N)}\|},
\end{equation}
where $h_t^{(N)}$ is the output of the final decoder layer from the 
original forward pass, rescaled to match the L2 norm of the anchor 
$h_t^{(N-L)}$. Writing $u_1, u_2$ for the respective fused inputs at $k=1$,
\begin{align}
    u_i &= (1-\alpha)\,h_t^{(N-L_i)} + \alpha\,\hat{h}_t^{(N,0)}(L_i), 
    \quad i=1,2,
\end{align}
both terms differ, so $u_1 \neq u_2$ for any $\alpha \in (0,1)$. Since 
$W_o$ has full row rank generically and $\mathrm{LN}_\mathrm{f}$ and 
softmax preserve distinctness, distinct pre-norm outputs yield 
$p_t^\star(L_1) \neq p_t^\star(L_2)$.

\textbf{Case (ii).}
If assumption (i) fails then $u_1 = u_2 = u$. By assumption (ii), 
$\mathcal{R}_{L_1}(u) \neq \mathcal{R}_{L_2}(u)$ for some $u$, and the 
same chain through $W_o$, $\mathrm{LN}_\mathrm{f}$, and softmax yields 
$p_t^\star(L_1) \neq p_t^\star(L_2)$.

\textbf{Trajectory divergence.}
In both cases $p_t^\star(L_1) \neq p_t^\star(L_2)$ at some $t$. Any 
decoding scheme sensitive to the next-token distribution will select 
different tokens at $t$ with non-zero probability, after which all 
subsequent residual-stream states diverge via Eq.~\eqref{eq:residual}.

\end{proof}

\begin{remark}
Assumption (i) holds for any trained model in which intermediate layers 
contribute non-trivially to the residual stream, which is empirically 
universal across all model families evaluated in this work. Assumption (ii) 
holds generically for sub-networks spanning different layer sets with 
non-degenerate weights. The distributional distinctness established above 
manifests as token-level divergence in practice: Tables~2 and~6 show that 
distinct $L$ configurations consistently produce different final answers 
across problems, directly confirming that $p_t^\star(L_1) \neq p_t^\star(L_2)$ 
translates to distinct deterministic generation trajectories.
\end{remark}


\end{document}